\documentclass{article}

\usepackage[margin=1in]{geometry}
\usepackage{times}
\usepackage{amsmath,amssymb,amsthm}
\usepackage{graphicx}
\usepackage{booktabs}
\usepackage[numbers,round]{natbib}
\usepackage{hyperref}
\usepackage{xcolor}
\hypersetup{colorlinks=true,linkcolor=blue!50!black,citecolor=blue!50!black,urlcolor=blue!50!black}

\graphicspath{{figures/}}

\newtheorem{proposition}{Proposition}

\newcommand{\Bt}{B_t}
\newcommand{\Ct}{C_t}
\newcommand{\dt}{\Delta}

\newif\ifanon
\newcommand{\codeurl}{\ifanon\url{https://anonymous.4open.science/r/selective-layer-audit}\else\url{https://github.com/isrlab/selective-layer-audit}\fi}

\title{An Exact Instrument for State Usage in Selective State-Space Models,
and the Input-Driven Migration It Reveals}
\ifanon\author{}\else\author{Raktim Bhattacharya\\ Aerospace Engineering, Texas A\&M University}\fi
\date{}

\begin{document}
\maketitle

\begin{abstract}
Selective state-space models such as Mamba route information through a bank of
first-order modes whose input coupling is set by a learned selection mechanism.
We give an exact instrument for measuring how a trained model uses these modes.
Because the state matrix is diagonal, each channel's output decomposes exactly
into per-mode contributions, and a per-(layer, channel, window) Gram tensor
yields the exact output error of dropping any subset of modes, offline, at any
budget. Validated against the reference implementation to a relative error of
$2.3\times10^{-7}$ on the Mamba-1 family where it is exact, the instrument
predicts the internal SSM-output error induced by the corresponding mode mask to
a median relative deviation of
$5\times10^{-7}$ over $4{,}464$ configurations. Applying the instrument across the
Mamba-1 family (130M--2.8B), the
deployed 7B Falcon-Mamba, and Mamba-2, we find that trained models re-allocate
their state space with the input: which modes carry the signal migrates across
contexts, and at the most affected layers a same-window diagonal top-$r$
selector roughly halves the output error of the calibration-ranked static set.
Frozen-signal counterfactuals attribute the
migration primarily to the input-dependent write map $B_t$; the timestep
usually identified with selectivity carries almost none of it. A retrospective
same-window mask gives an information-advantaged upper bound on the consequence:
at half the state budget, its perplexity is below static, Hankel-based, and
layer-adaptive rankings at every scale from 130M to the deployed 7B
Falcon-Mamba, and it matches the unpruned model. Out-of-window selectors recover
only part of this gain. The result measures state-usage headroom; it is not a
causal deployment protocol or a claimed compute or memory saving.

\end{abstract}

\section{Introduction}

Selective state-space models (SSMs) have made linear recurrences competitive
with attention for long-sequence modeling \citep{gu2023mamba,dao2024mamba2}. A
Mamba layer maintains, in each channel, a small linear state driven by a
first-order recurrence with a diagonal state matrix; the layer is
\emph{selective} in that the input, output, and timestep matrices are functions
of the current token. The state matrix itself is fixed after training. A layer
is therefore a bank of fixed first-order modes whose coupling to the signal is
scheduled, token by token, by the selection mechanism.

How a trained model uses that bank is not known. The question matters
in two ways. For efficiency, the recurrent state determines the decoding-state
memory footprint and is a target of recent pruning and reduction methods
\citep{ghost2026,perfmamba2025}; every such method rests on an assumption about
which modes matter. For interpretability, the modes are the layer's internal
degrees of freedom, and their usage is a direct measure of what the layer
stores. Both uses require an exact measurement of state usage.

Existing mode-importance criteria are static and approximate. Output-aware
pruning of Mamba-2 scores modes once, from calibration statistics, and prunes a
fixed set \citep{ghost2026}; activity-based criteria rank modes by the timestep
\citep{perfmamba2025}; the classical reduction they approximate, balanced
truncation, is defined for time-invariant systems and bounds a balanced
surrogate; the bound does not cover the deployed pruned layer
\citep{moore1981,glover1984}. None
measures, per input, which modes a trained layer is using, and none is exact.

An exact measurement is possible because the state matrix is diagonal: the
modes are decoupled, and a channel's scalar output is an exact sum
of per-mode contributions. Accumulating the outer products of these
contributions over a window yields a small Gram tensor, per (layer, channel,
window), from which the exact output error of pruning \emph{any} subset of
modes follows in closed form, offline, at any budget. The decomposition is
exact, and we validate it in two ways: against the reference implementation to a
relative error of $2\times10^{-7}$ on the diagonal case where it is exact, and we
verify that it predicts the internal SSM-output error induced by the same mode
mask in the running layer to a median relative deviation of $10^{-6}$ over
$4{,}464$ configurations spanning five models, every layer, and three budgets.

Across every model we test, the set of modes carrying a layer's output
\emph{migrates with the input}: which modes matter changes from one context to
the next. We quantify the migration by the gap between
input-scheduled and static mode selection at a fixed budget, and the gap is large: at
the most affected layers, the calibration-ranked static set incurs roughly twice
the error of a same-window diagonal top-$r$ selector. The migration is present in the released Mamba-1
family from 130M to 2.8B parameters, in the deployed 7B Falcon-Mamba, and in
Mamba-2, so it is a property of the selective-SSM design.

We then ask what causes the migration and whether it can be exploited. Freezing
each selective signal at its corpus mean, one at a time, isolates the cause: the
input-dependent write map $B_t$ carries the migration, and freezing it removes
most of the effect; the readout $C_t$ carries part of it; and the timestep $\dt$, the
signal usually identified with selectivity, carries almost none. A two-pass
same-window selector that re-selects modes from the scored window gives an
upper bound on the loss that input-dependent selection could recover from
static pruning; at half the state budget it matches or exceeds the unpruned
model. Out-of-window selectors recover a smaller fraction, so this experiment
identifies headroom rather than a deployed causal compression method. A static
timestep-activity criterion, consistent with the mechanism, collapses.

\paragraph{Contributions.}
\begin{itemize}
\item \textbf{An exact state-usage instrument.} The diagonal state matrix gives
an exact per-mode output decomposition and a Gram tensor that returns the exact
pruning error of any mode subset offline, validated to $2\times10^{-7}$ and
matching the corresponding internal SSM-output error in the running layer to a median deviation of
$10^{-6}$ across $4{,}464$ configurations
(Section~\ref{sec:method}).
\item \textbf{Input-driven mode migration.} Using the instrument, we show that
trained selective SSMs re-allocate their state across inputs, quantify it by
the scheduled-vs-static gap, and show it holds across the Mamba-1 family, the
7B Falcon-Mamba, and Mamba-2 (Section~\ref{sec:migration}).
\item \textbf{Mechanism.} Frozen-signal counterfactuals attribute the migration
primarily to the input-dependent write map $B_t$ on both Mamba-1 and Mamba-2;
the timestep carries almost none of it (Section~\ref{sec:mechanism}).
\item \textbf{Consequence.} Retrospective same-window mode selection gives an
information-advantaged upper bound that lies below static, modal-HSV, and LAST
rankings at equal budget at every scale to 7B and, at half the state budget,
matches the unpruned model on perplexity. Because it reads the mask from the
scored window, this is a headroom measurement rather than a deployed causal
scheduler. Prefix and half-window selectors recover only part of the gain. The
same-window headroom also appears on downstream tasks that use long-range state
(Section~\ref{sec:consequence}).
\end{itemize}

\section{Related Work}\label{sec:related}

\paragraph{Pruning and reducing selective SSMs.}
The recurrent state is a target for memory and compute reduction in selective
SSMs, and several methods shrink it. GHOST \citep{ghost2026} prunes Mamba-2 states by an output-aware score
computed once from forward-pass statistics, removing half the state at a cost
of about one point of perplexity; PerfMamba \citep{perfmamba2025} profiles and
removes low-activity states; unstructured methods prune SSM weights by
magnitude or gradient \citep{emnlp2025prune}. Each of these methods chooses a
single fixed set of states to keep. Our static baselines instantiate this
fixed-set choice, and the migration gap measures the same-window headroom
associated with input-dependent selection: because the important set moves with
the input, the calibration-ranked fixed set incurs up to twice the selector's
error at the most affected layers. The
mechanism result also bears on activity-based criteria: the timestep
does not carry the allocation signal, so a criterion scored on timestep activity
cannot capture the migrating allocation that an output-aware score reads.

\paragraph{Reducing neural SSMs by system-theoretic tools.}
A recent line of work reduces deep SSMs with tools from linear-systems theory. LAST
\citep{gwak2024last} prunes states by a layer-adaptive global Hankel-energy
threshold; Schwerdtner et al. \citep{schwerdtner2025hsv} regularize
the Hankel singular values during training to make the model compressible;
Ezoe and Sato \citep{ezoe2024s4bt} apply balanced truncation to the diagonal
state-space layers of S4; Forgione et al. \citep{forgione2024mor} give a
system-theoretic order reduction of deep SSMs. These methods build one reduced
model, time-invariant per channel, from training-time or calibration
statistics, and fix the mode set in advance. We find that the trained model's
allocation moves from one input to another, so a single reduced set leaves
error unrecovered. Our instrument is complementary: these methods could use it
to check whether a fixed reduction is safe for a given input.

\paragraph{Model reduction for linear systems.}
The output-aware scores above approximate balanced truncation \citep{moore1981},
whose $H_\infty$ error bound \citep{glover1984} is defined for a time-invariant
system and controls the error of a \emph{balanced surrogate}; it does not bound
a directly pruned layer. A selective SSM is time-varying, and its per-mode importance is a
function of the operating point; the exact per-mode error \eqref{eq:gram} we use
is a measurement on the realized trajectory. The pruning
we deploy drops modes in the layer's native diagonal basis, modal truncation
rather than a balancing transform, so the near-optimality of balanced truncation
is not invoked; consistent with this, a modal-HSV ranking built from empirical
per-mode Gramians underperforms
realized-energy ranking in our experiments (Section~\ref{sec:consequence}). The
time-varying reduction literature \citep{shokoohi1983,sandberg2004} provides the
Gramian and Hankel-singular-value language for the phenomenon, which we use in
the analysis; it does not certify the deployed pruning.

\paragraph{Interpreting Mamba.}
Two lines of work characterize what a Mamba layer computes. Hidden-attention methods
reformulate the selective recurrence as a data-controlled linear operator and
extract implicit token-to-token attention \citep{ali2024hidden}; auto-encoder
probes measure which token contents a layer forgets, and tie forgetting to
pretraining frequency \citep{selmem2025}. These give an attention view and a
token-content view; we read the state directly, asking which of a layer's
internal modes carry the signal for each input. All three describe the same
layer; only the state view makes the reduction consequence
exact, because the modes are the coordinates on which the reduction acts.

To our knowledge, input-resolved per-mode importance (the effective order of a
selective layer as a function of the input, and the gap between input-scheduled
and static mode selection) has not been measured before. We state this as the
gap our instrument fills; we make no priority claim about the phenomenon.

\section{The Instrument}\label{sec:method}

\subsection{Selective SSM layer}

We use the Mamba-1 layer \citep{gu2023mamba}; the Mamba-2 case is treated at the
end of the section. Write $H$ for the number of channels (the expanded inner
width) and $N$ for the state size. In channel $h$, at step $t$, the layer holds
a state $x_{h,t}\in\mathbb{R}^N$ driven by a scalar input $u_{h,t}$ (the
post-convolution, post-activation hidden signal). The recurrence is
\begin{equation}
x_{h,t,i} \;=\; \bar A_{h,t,i}\,x_{h,t-1,i} \;+\; \dt_{h,t}\,B_{t,i}\,u_{h,t},
\qquad
\bar A_{h,t,i} = \exp\!\big(\dt_{h,t}\,A_{h,i}\big),
\label{eq:recurrence}
\end{equation}
for each state index $i=1,\dots,N$, where $A_h\in\mathbb{R}^N$ is the fixed
diagonal state matrix ($A_{h,i}<0$, learned as $A_{h,i}=-\exp(\cdot)$),
$B_t,C_t\in\mathbb{R}^N$ and the timestep $\dt_{h,t}>0$ are the selective
signals produced from the current token, and $D_h$ is a skip term. The channel
output, after the layer's output gate $g_{h,t}=\mathrm{SiLU}(z_{h,t})$, is
\begin{equation}
y_{h,t} \;=\; g_{h,t}\Big(\textstyle\sum_{i=1}^N C_{t,i}\,x_{h,t,i} \;+\; D_h\,u_{h,t}\Big).
\label{eq:output}
\end{equation}
The state matrix is diagonal and fixed; the token dependence enters only through
$B_t$, $C_t$, $\dt_{h,t}$, and the gate. Each channel is therefore a diagonal
realization of order $N$ with fixed continuous-time dynamics ($A_h$; the
discrete decay $\bar A = \exp(\dt A)$ still moves with the token) whose input
coupling, output coupling, and timestep are functions of the input: an
input-parameterized
(quasi-LPV) system, linear time-varying along each realized trajectory. The
channel is a bank of $N$ first-order modes, each a real pole $A_{h,i}<0$,
written and read on a per-token schedule.

\subsection{Exact per-mode decomposition}

Because $A_h$ is diagonal, the states $x_{h,t,i}$ do not interact across $i$, so
the output \eqref{eq:output} is an exact sum of per-mode contributions. Define
the post-gate output of mode $i$ in channel $h$,
\begin{equation}
m_{h,t,i} \;=\; g_{h,t}\,C_{t,i}\,x_{h,t,i},
\qquad
f_{h,t} \;=\; g_{h,t}\,D_h\,u_{h,t},
\label{eq:mode-output}
\end{equation}
so that $y_{h,t} = \sum_{i} m_{h,t,i} + f_{h,t}$ exactly. Pruning a mode means
omitting its contribution from the output. For a kept set
$S\subseteq\{1,\dots,N\}$, the pruned output is
$\hat y^{S}_{h,t} = \sum_{i\in S} m_{h,t,i} + f_{h,t}$, and the squared output
error over a window of length $L$ is a quadratic form in the dropped modes:
\begin{equation}
\sum_{t=1}^L \big(y_{h,t}-\hat y^{S}_{h,t}\big)^2
\;=\; \sum_{i\notin S}\ \sum_{j\notin S} G_{h,ij},
\qquad
G_{h,ij} \;=\; \sum_{t=1}^L m_{h,t,i}\,m_{h,t,j}.
\label{eq:gram}
\end{equation}
The Gram matrix $G_h\in\mathbb{R}^{N\times N}$ is symmetric positive
semidefinite; we store its upper triangle, $N(N{+}1)/2$ numbers per (layer,
channel, window). Equation~\eqref{eq:gram} gives the \emph{exact} output error
of pruning any subset $S$, in a single pass. For the nested family of
top-$r$ sets under a mode ordering $\pi$, the error at every budget $r$ is a
single reverse cumulative sum over $G_h$.

This is the object GHOST \citep{ghost2026} and activity criteria
\citep{perfmamba2025} approximate: they score modes once and keep a fixed set,
whereas \eqref{eq:gram} returns the exact cost of any set, resolved per input.

\subsection{Validation and layer-local cross-check}

The instrument recomputes each mixer's internal signals from its captured input
and reconstructs the output through the layer's own weights. We verify it in
two ways. First, reconstruction: on three windows by three layers per model, the
reconstructed mixer
output matches the reference implementation to a relative error of at most
$2.3\times10^{-7}$ on the Mamba-1 family, $2.6\times10^{-7}$ on Falcon-Mamba,
and $1.0\times10^{-6}$ on Mamba-2 (whose pre-norm decomposition, below, carries
one further gated normalization). Second, layer-local pruning: applying a mode
mask in the running layer and measuring the resulting internal SSM-output error
against the reference forward's own SSM output agrees with the value \eqref{eq:gram}
predicts offline to a median relative deviation of $1\times10^{-6}$ over
$4{,}464$ configurations (five models, every layer, six windows,
$r\in\{4,8,12\}$), maximum $3.6\times10^{-5}$; the same sweep bounds the
reconstruction error at the pre-projection output by $6.9\times10^{-6}$. The
pruned side necessarily runs through the per-mode decomposition (the reference
kernel has no per-channel mode mask), so the deviation floor is set by that
reconstruction error; the Gram identity itself is exact. This check is
layer-local and pre-projection. End-to-end changes in logits and perplexity are
measured separately in Section~\ref{sec:consequence}.

\subsection{Measures of state usage}

From the Gram tensor and the per-mode energies $e_{h,i}=G_{h,ii}$ we read off,
per layer:

\emph{Migration gap.} Fix a budget $r$ and a per-window keep-set rule that
selects a kept set $S_w$ for each evaluation window $w$. Write the layer's
energy-weighted relative output error over the evaluation set as
\begin{equation}
E(r) \;=\; \sqrt{\frac{\sum_w \sum_h \sum_{i,j\notin S_w} G_{h,ij}^{(w)}}
                     {\sum_w \sum_h \sum_t (y_{h,t}^{(w)})^2}},
\label{eq:Er}
\end{equation}
the total dropped energy \eqref{eq:gram} over the total output energy, both
summed over channels and windows. Let $E_{\mathrm{static}}(r)$ use one fixed set
$S$ ranked by mean per-mode energy on disjoint calibration windows, and
$E_{\mathrm{diag}}(r)$ use the per-window top-$r$ set $S_w$ by that window's own
diagonal per-mode energies $e_i=G_{ii}$. Their ratio
$\rho(r)=E_{\mathrm{diag}}(r)/E_{\mathrm{static}}(r)$ measures the excess error
of this calibration-ranked static set relative to the same-window diagonal
top-$r$ selector: $\rho\approx1$ means the diagonal-energy ranking is stable, and
small $\rho$ means that it changes with the input. The static set is ranked on
disjoint calibration windows while the per-window set reads the evaluation
window's own energies, so $\rho$ is an in-sample headroom ratio. It is not the
solution of the full-Gram subset-selection problem. The churn metric, the
frozen-$\Bt$ result, and the switched-system experiment below establish that the
gap reflects genuine cross-context migration.

The gap has a closed form that identifies the source of a static set's excess
error. Fix a
channel $h$ and write $e^w_i=G^{(w)}_{h,ii}$ for its per-mode diagonal energy in
window $w$; neglecting the off-diagonal Gram terms, dropping the set
$D\subseteq\{1,\dots,N\}$ costs $\widehat E^2(D)=\sum_{i\in D}e^w_i$, and
$\widehat E^2_{\mathrm{static}}(w)$, $\widehat E^2_{\mathrm{diag}}(w)$ denote
this cost for $D$ the complement of $S$ and of $S_w$ respectively.

\begin{proposition}\label{prop:churn}
For each channel and window, with $S$ the static top-$r$ set and $S_w$ the
per-window top-$r$ set by $e^w$,
\[
\widehat E^2_{\mathrm{static}}(w)-\widehat E^2_{\mathrm{diag}}(w)
= \sum_{i\in S_w\setminus S} e^w_i - \sum_{i\in S\setminus S_w} e^w_i \ \ge\, 0,
\]
the net energy of the modes that migrate into the top-$r$ set.
\end{proposition}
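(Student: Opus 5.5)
The plan is to write both diagonal costs as "total energy minus kept energy" and then compare the kept sets directly. Let $U=\{1,\dots,N\}$ and $T^w=\sum_{i\in U}e^w_i$. Since the dropped set is the complement of the kept set, $\widehat E^2_{\mathrm{static}}(w)=T^w-\sum_{i\in S}e^w_i$ and $\widehat E^2_{\mathrm{diag}}(w)=T^w-\sum_{i\in S_w}e^w_i$. Subtracting, $T^w$ cancels, and the difference equals $\sum_{i\in S_w}e^w_i-\sum_{i\in S}e^w_i$.

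Next I split each kept set over the partition into $S\cap S_w$, $S_w\setminus S$ and $S\setminus S_w$. The common part $S\cap S_w$ appears in both sums and cancels. What remains is exactly $\sum_{i\in S_w\setminus S}e^w_i-\sum_{i\in S\setminus S_w}e^w_i$, which is the stated identity. The first sum is the energy of modes that migrate into the window's top-$r$ set, and the second is the energy of static modes that leave it.

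For the inequality, both $S$ and $S_w$ have cardinality $r$. Hence $|S_w\setminus S|=r-|S\cap S_w|=|S\setminus S_w|$, so the two difference sets can be matched by an arbitrary bijection $\phi:S_w\setminus S\to S\setminus S_w$. Take any $i\in S_w\setminus S$. Because $S_w$ consists of the $r$ largest values of $e^w$ and $\phi(i)\notin S_w$, we have $e^w_i\ge e^w_{\phi(i)}$. Summing over $i$ gives the nonnegativity. Equivalently, $S_w$ maximizes $\sum_{i\in K}e^w_i$ over all $|K|=r$, and $S$ is one such $K$.

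The only delicate point is ties in $e^w$, where the top-$r$ set is not unique. I would note that the argument needs only that $S_w$ is some maximizer under a fixed tie-break, since every element kept has energy at least that of every element excluded. No other obstacle arises; in particular the off-diagonal Gram terms play no role, because the proposition concerns the diagonal surrogate $\widehat E^2$ by definition.
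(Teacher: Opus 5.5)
Your proof is correct and follows the paper's argument. For the identity you cancel the common kept part $S\cap S_w$ after writing each cost as total minus kept energy, whereas the paper cancels the shared dropped complement of $S\cup S_w$; the inequality then comes, in both, from $|S_w\setminus S|=|S\setminus S_w|$ and a pairing in which each mode of $S_w\setminus S$ has energy at least that of its partner (your explicit remark on ties is a small addition the paper leaves implicit).
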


\begin{proof}
The two dropped sets share the complement of $S\cup S_w$, so their costs differ
only on the symmetric difference, giving the identity. For the inequality, both
sets keep $r$ modes, so $|S_w\setminus S|=|S\setminus S_w|$; and since $S_w$
holds the $r$ largest energies, $e^w_i\ge e^w_j$ for every $i\in S_w\setminus S$
and $j\in S\setminus S_w$. Pairing the two equal-size sets and summing makes the
difference non-negative.
\end{proof}

This establishes the diagonal $\rho\le1$: the top-$r$ churn $|S_w\setminus S|$
sets the support of the gap, and the energy of the churning modes sets its size.
The gap is governed by an energy-weighted churn. Two
measurements connect the theorem to the full-Gram quantity we report (the
inequality itself holds in every channel-window case, so it serves only as a
consistency check on the pipeline). Across layers, the diagonal gap matches the
full-Gram $\rho$ at correlation $0.96$, and the energy-weighted churn predicts
$1-\rho$ at Spearman $0.80$ against $0.74$ for raw set churn. The full-Gram gap
adds only the off-diagonal cross-term difference; it is small except in the
static tail, where it can push the measured $\rho$ slightly above one (layers
21--22 of 130M reach $1.003$); the theorem covers only the diagonal quantity.

\emph{Churn.} For two windows, we average the top-$r$ intersection size across
channels and convert that average intersection to a Jaccard distance. We report
this distance for within-domain and cross-domain pairs, with the same statistic
between the two halves of one window as a null (median $0.22$ at 130M);
migration appears as cross-domain churn above the null (median $0.29$).
The margin is modest: cross-domain churn exceeds within-domain in every layer,
but the pilot ratio was $1.28\times$, below the $1.5\times$ we pre-registered.
Churn is corroborating evidence; the migration case rests on the gap $\rho$,
the frozen-signal counterfactuals, and the switched-system experiment.

\emph{Effective order.} The participation ratio
$\mathrm{PR}_h=(\sum_i e_{h,i})^2/\sum_i e_{h,i}^2$, the number of modes that
carry the channel's energy: the effective order of the realization at the
operating point.

\subsection{Mamba-2}

In Mamba-2 \citep{dao2024mamba2} the state decay is scalar per head, so the
per-mode decomposition \eqref{eq:mode-output} holds at the pre-norm SSM output;
the gated RMSNorm that follows couples modes within a position, so the exact
offline-Gram identity \eqref{eq:gram} does not carry past the norm. We therefore
compute the same per-mode outputs and the migration gap online, per window, and
measure end-to-end pruning error empirically (Section~\ref{sec:consequence}).
The state size $N=128$ also makes storing $G_h$ impractical, which the online
computation avoids. The instrument's exactness is thus specific to the
per-channel diagonal case (Mamba-1, Falcon-Mamba); the state-usage measures
transfer.

\section{Migration, Mechanism, and Consequence}\label{sec:experiments}

\paragraph{Setup.}
We audit the released Mamba-1 checkpoints (130M, 370M, 790M, 1.4B, 2.8B), the
7B Falcon-Mamba \citep{falconmamba2024}, and two Mamba-2 checkpoints (130M,
780M). Windows are 1024 tokens, 32 per domain from three domains (English
prose, code, technical text); all seeds are fixed. Layerwise migration uses an
even/odd split of these 96 audit windows within each domain, with the static set
ranked on the even windows and evaluated on the odd windows. End-to-end
perplexity uses a later disjoint evaluation set of 24 windows per domain. Those
evaluation windows are disjoint at the window level, though a document may span
both sets; any residual leakage helps the static baseline, whose ranking comes
from the calibration windows, so the reported static-vs-same-window gaps are
conservative. Every reported number carries the reference-vs-code validation of
Section~\ref{sec:method}. Unless stated, the budget is $r=8$ of
$N=16$ for Mamba-1 and Falcon-Mamba, and $r=64$ of $N=128$ for Mamba-2
(one-half of the state).

\subsection{Input-driven mode migration}\label{sec:migration}

The set of modes carrying a layer's output changes with the input. At the most
affected layer of each model the migration gap reaches
$\rho\in[0.44,0.57]$: a same-window diagonal top-$r$ selector roughly halves the
output error incurred by the calibration-ranked static set at the same budget.
The effect is present at every scale of
the Mamba-1 family (minimum $\rho(8)$ between 0.51 and 0.54), in the deployed 7B
Falcon-Mamba (0.48), and in Mamba-2 (0.44--0.57). Cross-domain churn exceeds
within-domain churn in every layer of every model (368 of 368 layers across
the eight models), at a
modest median ratio of $1.15$--$1.32$ (Section~\ref{sec:method}).

The migration is localized in depth, and its geometry depends on the
architecture. In the small Mamba-1 models (130M--790M) the migrating layers form
a contiguous mid-network band at relative depth $0.6$--$0.75$, while the final
layers are static ($\rho\to1$); at 1.4B and above and in Falcon-Mamba the
strongly migrating layers are punctuated, located
early-to-mid network; Mamba-2 migrates throughout depth, with
architecture-dependent locations and no static tail (Figure~\ref{fig:band}). The effective order shows the same pattern:
the participation ratio rises from about two modes at the ends of the network to
five--eight in the migrating band. We report the contiguous-band geometry as a
Mamba-1 finding; the phenomenon itself is general.

\begin{figure}[t]
\centering
\includegraphics[width=\textwidth]{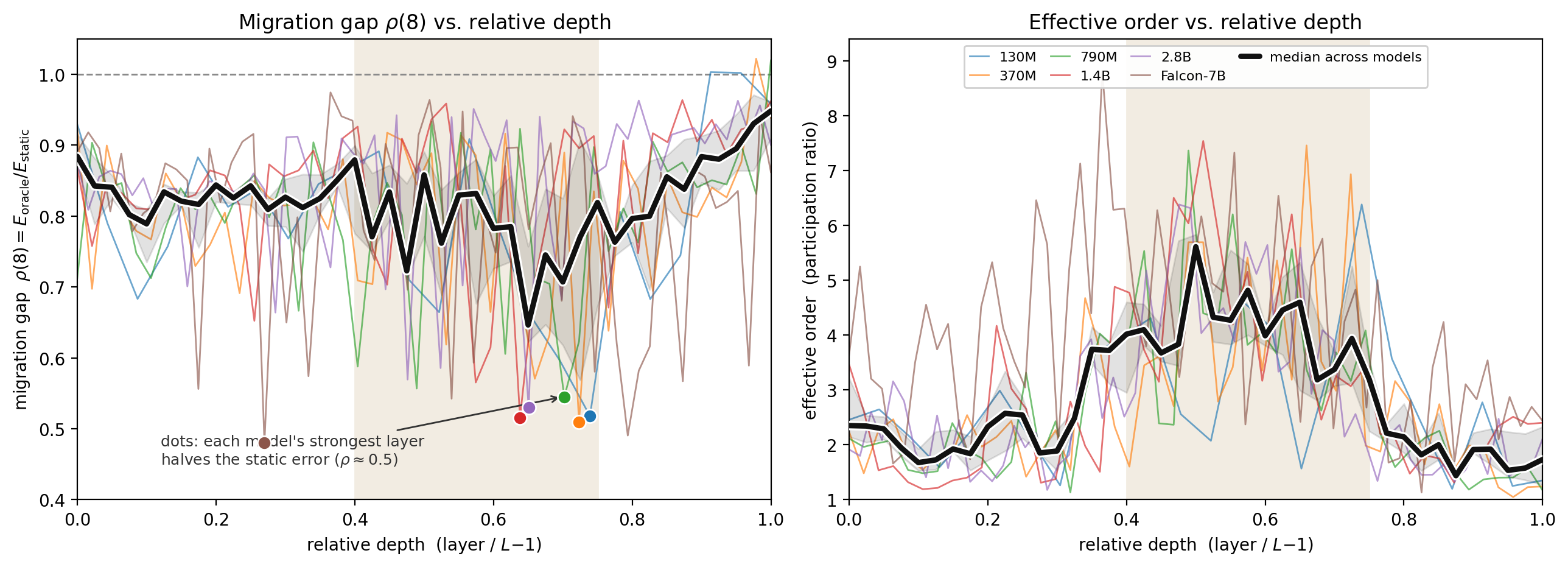}
\caption{Migration gap $\rho(8)=E_{\mathrm{diag}}/E_{\mathrm{static}}$
(left) and effective order (right) versus relative depth, over six models
(Mamba-1 130M--2.8B and Falcon-Mamba-7B). Coloured lines are individual
models (shared legend); the bold black line is the across-model median with an
interquartile ribbon; the shaded region is the migrating band (relative depth
$0.4$--$0.75$). Left: $\rho<1$ (dashed) means the same-window diagonal top-$r$
selector reconstructs the layer output with less error than the
calibration-ranked static selector; the median dips
through the band and each model's strongest layer, marked by a dot in the
model's colour, reaches $\rho\approx0.5$: the static set's error is
twice the same-window selector's at the same budget. Right: the participation
ratio rises from about two modes at the ends of the network to five--eight in
the same band, so migration concentrates in the layers that spread their
output over more modes.}
\label{fig:band}
\end{figure}

\subsection{Mechanism}\label{sec:mechanism}

Three token-dependent signals could account for the migration, each with a
distinct role in the realization. The write map $\Bt$ sets the instantaneous input
coupling, which modes the current token writes into; the readout $\Ct$ sets the
instantaneous output coupling, which modes the output reads; and the timestep
$\dt$ enters twice in \eqref{eq:recurrence}, as the effective poles
$\bar A=\exp(\dt A)$ and as a gain on the write term. Within a channel, however,
$\dt_{h,t}$ is common to all $N$ modes, so its write-gain role is mode-uniform:
of the three signals, only $\Bt$ (and $\Ct$ at readout) can differentially
select among modes at a token, while $\dt$ differentiates them only through the
exponent $\exp(\dt A_{h,i})$. We isolate each by rerunning a
layer's per-mode decomposition with one signal frozen at its corpus mean and
measuring the migration gap under the counterfactual (Table~\ref{tab:mechanism},
Figure~\ref{fig:mechanism}).
Freezing $\Bt$ removes the migration: the band-median gap rises from $0.70$ to
$0.96$ on Mamba-1 and from $0.77$ to $0.96$--$0.97$ on Mamba-2, and cross-domain
churn collapses. Freezing $\Ct$ removes part of it; freezing $\dt$ (both roles at
once) leaves the gap and the churn essentially unchanged. The migration is
therefore an input-coupling effect: the write map selects, token by token, which
modes the signal drives, and freezing it pins that selection to a single average
pattern on both architectures. The readout coupling $\Ct$ supplies the
remainder; the timestep $\dt$, the basis of activity pruning criteria, does
not carry the migration,
consistent with its mode-uniform write gain, which leaves the poles as its only
mode-differential action. Two controls confirm the attribution. Keeping only $\Bt$ live (freezing both $\Ct$
and $\dt$) reproduces much of the migration on its own ($\rho=0.87$ against the
real $0.70$ and the frozen $0.96$), so $\Bt$ is both necessary and largely
sufficient. And matching the frozen-$\Bt$
layer's total output energy to the real layer's leaves the gap unchanged
($0.964\to0.968$), which rules out an energy-renormalization artifact: the
$\rho\to1$ under freeze-$\Bt$ is a loss of context-dependence.

\begin{table}[t]
\centering
\caption{Migration gap under frozen-signal counterfactuals: Mamba-1 median
over the migrating band, Mamba-2 median over all layers (it migrates
throughout depth, with no static tail to exclude).
Freezing the write map $\Bt$ removes most of the migration ($\rho\to0.96$);
freezing the timestep $\dt$ does not. On Mamba-1, where we verified it, each
frozen variant's total output energy stays within an order of magnitude of the
real layer's, and the energy-matched control (Section~\ref{sec:mechanism})
rules out a renormalization artifact.}
\label{tab:mechanism}
\begin{tabular}{lccc}
\toprule
& Mamba-1 130M & Mamba-2 130M & Mamba-2 780M \\
\midrule
real            & 0.70 & 0.77 & 0.77 \\
freeze $\Bt$    & \textbf{0.96} & \textbf{0.96} & \textbf{0.97} \\
freeze $\Ct$    & 0.84 & 0.90 & 0.90 \\
freeze $\dt$    & 0.72 & 0.74 & 0.77 \\
\bottomrule
\end{tabular}
\end{table}

\begin{figure}[t]
\centering
\includegraphics[width=0.9\textwidth]{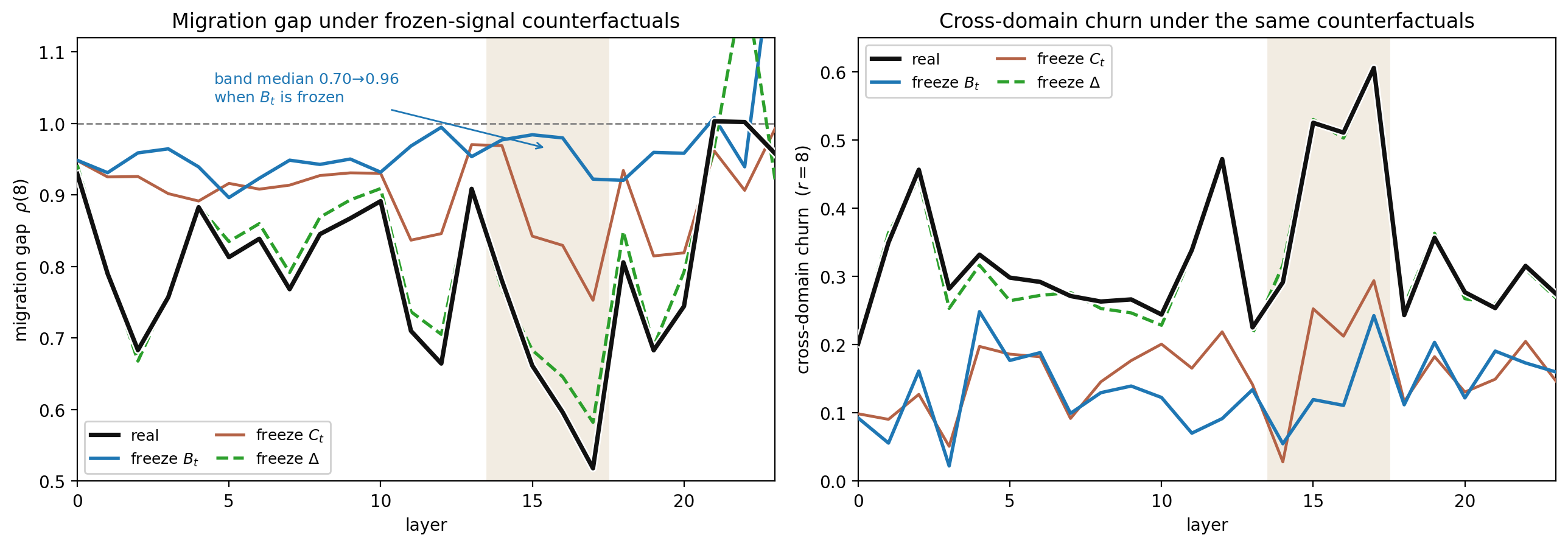}
\caption{Frozen-signal mechanism at Mamba-1 130M: migration gap $\rho(8)$
(left) and cross-domain churn (right) per layer, each selective signal frozen
at its corpus mean. Real (black); freeze $\Bt$ (blue); freeze $\Ct$ (rust);
freeze $\dt$ (green dashed). The shaded region is the migrating band (layers
14--17). Freezing the write map $\Bt$ lifts the band-median gap from $0.70$ to
$0.96$; freezing the readout $\Ct$ lifts it partway ($0.84$); freezing the
timestep $\dt$ overlays the real curve ($0.72$). Freezing $\Bt$ or $\Ct$
collapses the cross-domain churn, while freezing $\dt$ leaves it unchanged, so
the timestep carries neither the migration nor the churn.}
\label{fig:mechanism}
\end{figure}

Consistent with the mechanism, the migrating modes are the slow modes. Ordering
each mode by its trained time constant $1/(|A_{h,i}|\,\bar\dt_h)$, the
instability of top-$r$ membership across contexts rises with the
time constant from its minimum near one token toward the slow modes, reaching
$0.78$ in the slowest decile against a baseline near
$0.3$, at both 130M and 790M (Figure~\ref{fig:timescale}). Slow modes integrate
the input-gated writes over long horizons, so the direction a slow mode holds,
and hence which slow modes the input reaches, is whatever the context has written.

\begin{figure}[t]
\centering
\includegraphics[width=0.9\textwidth]{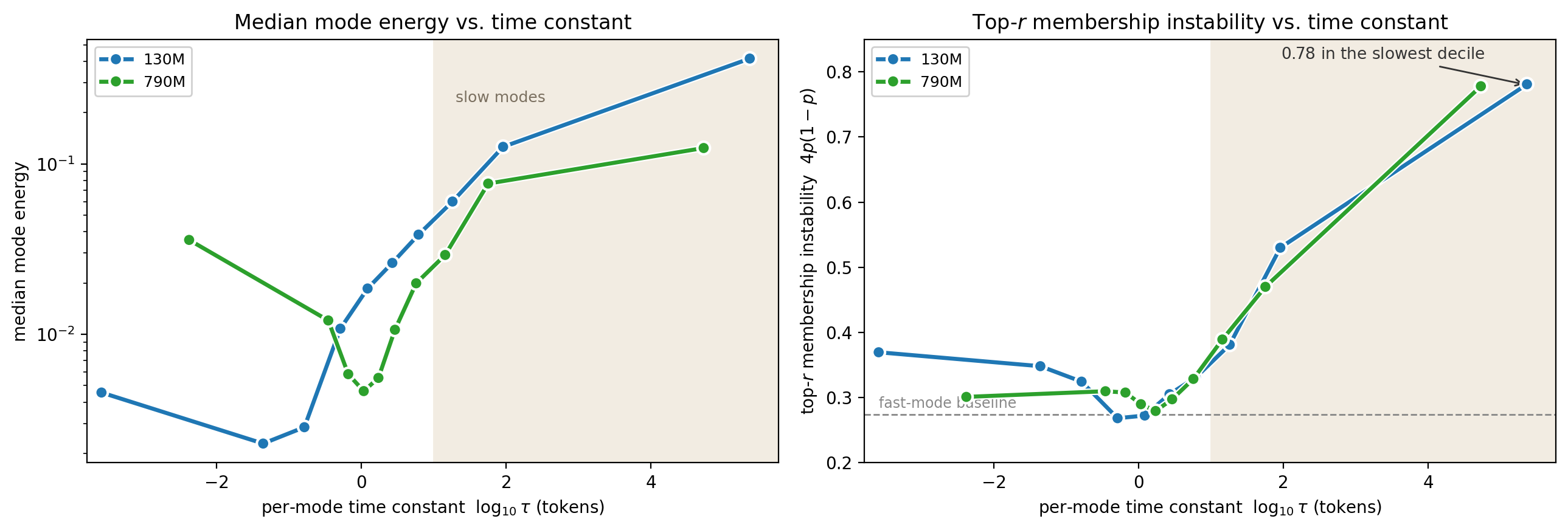}
\caption{Timescale structure at Mamba-1 130M (blue) and 790M (green): median
mode energy (left) and top-$r$ membership instability $4p(1{-}p)$ (right)
versus per-mode time constant $\tau=1/(|A_{h,i}|\,\bar\dt_h)$, binned into
deciles. Shaded: the slow modes ($\tau>10$ tokens). Both quantities rise
toward the slow modes: median energy grows by over an order of magnitude, and
the instability of top-$r$ membership rises from a fast-mode baseline near
$0.28$ to $0.78$ in the slowest decile, in both models.}
\label{fig:timescale}
\end{figure}

\subsection{Consequence}\label{sec:consequence}

If the important set migrates, then a selector with access to the input can have
lower error than a static choice at equal budget. The per-layer error
\eqref{eq:gram} is exact; the
end-to-end perplexity below is measured empirically, since pruning shifts the
inputs of downstream layers and the per-layer identity does not compose. We
compare, at each budget, three static rules and the scheduler: a calibration
energy ranking (\textbf{static}); a modal Hankel ranking that scores
each mode by the Hankel singular value built from its empirical per-mode
Gramians (realized write and readout second moments accumulated over the
calibration windows), instantiating the output-aware / Hankel-energy
scoring principle of GHOST and LAST on our per-mode outputs
(\textbf{modal-HSV}); a
layer-adaptive allocation of the same scores with the layer-wise cumulative
normalization of \citet{gwak2024last}, which keeps the
same total budget but distributes the kept modes non-uniformly across layers
(\textbf{LAST}; we normalize the realized-Gramian score rather than the
LTI $H_\infty$ norm of \citealp{gwak2024last}, which is not defined for a
token-varying realization; that method was designed for time-invariant
S4/S5 stacks); and a retrospective same-window input-scheduled set
(\textbf{scheduled oracle}: a first pass measures the scored window's mode
energies, and a second pass prunes that same window to its top-$r$ modes). The
scheduled-oracle column quantifies headroom. It is not a causal deployment
protocol, because its mask is allowed to depend on tokens in the scored window.
Table~\ref{tab:e2e} and Figure~\ref{fig:e2e} report this retrospective
perplexity at half the state budget across the Mamba-1 family (130M--2.8B), the
deployed 7B Falcon-Mamba, and Mamba-2. For the Mamba-1 and Falcon rows we report
$95\%$ paired CIs from an iid bootstrap over evaluation windows; the Mamba-2
rows were run without per-window CIs.

\begin{table}[t]
\centering
\caption{Retrospective same-window perplexity at half the state budget ($r=8$ of $16$; Mamba-2
at $r=64$ of $128$), with the paired same-window-oracle-vs-unpruned gap's
$95\%$ iid-window bootstrap CI ($10{,}000$ resamples) for Mamba-1 and Falcon.
The same-window oracle reads its mask from the window being scored, so this
column is an information-advantaged upper bound on input-dependent selection.
That upper bound lies below every static rule on every model where that rule is
run (modal-HSV and LAST are implemented for the per-channel diagonal case,
hence absent on Mamba-2), and its gap below the unpruned model has an iid-window
CI excluding zero from 130M to the deployed 7B; 370M was audited but not
included in this grid. Baselines instantiate the published scoring principles
on our per-mode outputs with no per-method tuning; scheduled uses none either.
A native comparison against the released GHOST code on Mamba-2 is reported
separately in Table~\ref{tab:ghost}.}
\label{tab:e2e}
\begin{tabular}{lcccccc}
\toprule
model & unpruned & static & modal-HSV & LAST & same-window oracle & oracle gap CI \\
\midrule
130M       & 12.38 & 14.16 & 14.52 & 25.78 & \textbf{11.84} & $[-0.66,-0.44]$ \\
790M       & 8.05  & 8.78  & 9.29  & 12.23 & \textbf{7.75}  & $[-0.36,-0.24]$ \\
1.4B       & 7.52  & 8.25  & 8.49  & 8.62  & \textbf{7.31}  & $[-0.27,-0.17]$ \\
2.8B       & 6.82  & 7.43  & 7.64  & 8.41  & \textbf{6.66}  & $[-0.21,-0.13]$ \\
Falcon-7B  & 4.48  & 4.82  & 4.96  & 5.27  & \textbf{4.44}  & $[-0.06,-0.03]$ \\
Mamba-2 130M & 12.30 & 13.58 & --  & --    & \textbf{11.00} & --             \\
Mamba-2 780M & 8.07  & 8.53  & --  & --    & \textbf{7.33}  & --             \\
\bottomrule
\end{tabular}
\end{table}

At the quarter budget $r=4$ the same ordering holds with larger gaps: at 130M,
static reaches perplexity $19.8$ and modal-HSV $20.8$, LAST collapses to $234$
(its layer-adaptive allocation strips some small-model layers below
viability), while scheduled
stays at $14.2$ (gap CI $[+1.54,+2.16]$ above the unpruned $12.4$). A
timestep-activity ranking and random mode sets, the two rules the mechanism
predicts should fail, collapse: at 130M, $r=8$ they reach perplexity $52.5$ and
$127$.

\begin{figure}[t]
\centering
\includegraphics[width=0.9\textwidth]{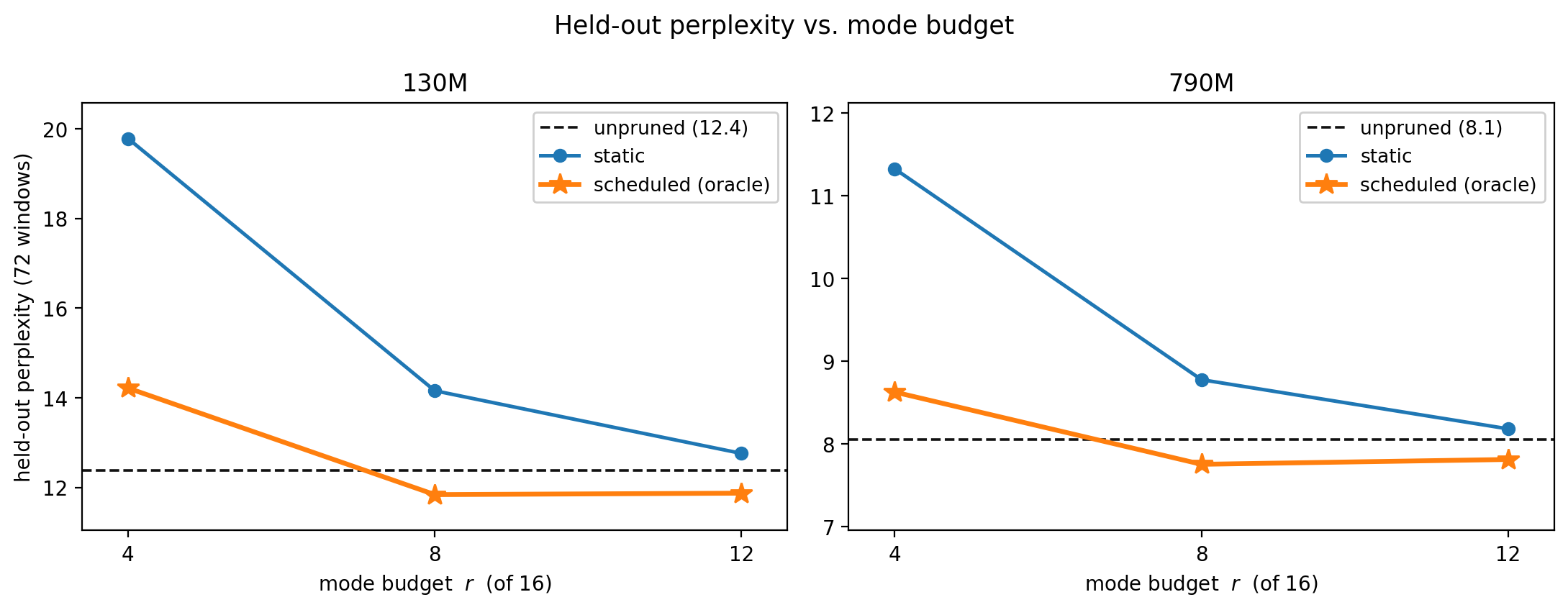}
\caption{Retrospective perplexity versus mode budget $r$ for Mamba-1 130M and 790M.
Input-scheduled selection (orange, a same-window oracle) is below static (blue)
at every budget, and at or below the unpruned baseline (dashed) for $r\ge8$;
the gap between static and scheduled widens as $r$ decreases (at $r=4$, static
$19.8$ versus scheduled $14.2$ on 130M). The mechanism-predicted rules are off
scale and omitted: timestep-activity ($\dt$-activity) and random reach
perplexity $52.5$ and $127$ at $r=8$ on 130M ($5.0\times10^3$ and
$9.1\times10^4$ at $r=4$).}
\label{fig:e2e}
\end{figure}

The same-window upper bound lies below every static rule it
is compared against, at every scale: at 7B it is $0.38$ perplexity below the
static energy ranking and further below modal-HSV and LAST, and this gap remains
large relative to the unpruned margin as scale grows (Table~\ref{tab:e2e}). The
comparison measures the value of information unavailable to a fixed static set.
Separately, the same-window oracle
also falls below the \emph{unpruned} model at half budget, with a paired
bootstrap CI excluding zero under iid window resampling at every Mamba-1 and
Falcon scale we test; the
margin shrinks with scale, from $0.54$ perplexity at 130M to $0.04$ at 7B. The
window bootstrap treats contiguous same-stream windows as exchangeable; a
circular block bootstrap (length-4 blocks within domain), which respects
within-stream autocorrelation, widens the 7B gap CI to $[-0.073,-0.020]$ and
still excludes zero. A
control identifies the source of this margin: selecting the mask on the first half of
each window and scoring the second half does \emph{not} outperform the unpruned model,
while the same-window mask still does under the same continuation scoring. The
sub-unpruned margin is therefore same-window adaptation: the mask must be
read from the tokens being scored. The control rules out a transferable
denoising effect. It also means the sub-unpruned result should be read as
retrospective headroom, not causal perplexity. The
same ordering holds on Mamba-2 (Table~\ref{tab:e2e}).

We also ran the released GHOST code natively on Mamba-2 at the same $50\%$
budget (Table~\ref{tab:ghost}). Under this table's protocol its perplexity
exceeds the realized-energy static ranking's at both scales and lies $2.98$
(130M) and $1.44$ (780M) above the same-window oracle. Because GHOST's importance score is an
output-aware realized-energy quantity, the comparison could turn on
calibration data; we therefore recomputed it with
GHOST calibrated on our own $96$ calibration windows, the same data the
static ranking uses. The result is nearly unchanged ($13.98\to13.86$, $8.77\to8.71$)
and remains above static, so the gap is a property of the
static-vs-input-scheduled distinction. On GHOST's own WikiText-2 protocol
the pruned models cost $+3.2$ and $+1.16$ word-perplexity; the 780M figure
matches its reported $\sim\!1$-point operating point, while 130M is more
sensitive to $50\%$ pruning at small scale. The released code's calibration
hooks attach to the reference forward, which \texttt{transformers} bypasses
whenever the fused Mamba kernels are installed, so on such machines the
released pipeline silently prunes nothing; we verified the no-op and forced
the reference path in a wrapper, leaving the method's code unchanged and
confirming that the resulting masks zero $50.0\%$ of the state at both scales
and both calibrations. The same-window headroom also appears on downstream
tasks where the state is used: on LAMBADA, static pruning to $r=8$ collapses
790M accuracy from
$0.63$ to $0.41$ and the same-passage mask (read from the scored passage)
recovers $86\%$ of the drop under the same information-advantaged protocol; on tasks that do
not stress long-range state (PIQA, ARC-easy) pruning to half budget has
almost no effect and the two rankings are within noise.

The scheduled result uses a full first pass to read the window's energies, and
the headroom it reaches is not cheaply captured. Under continuation scoring at
half budget, the fraction of the same-window oracle's gain captured grows monotonically with
how much of the scored window the mask has seen: a regime-conditioned mask (one
static set per domain, applied by the window's domain) captures $2$--$6\%$
(130M--7B), despite the domain being almost perfectly readable from the same
energies (the probe at the end of this section); a 256-token prefix estimator
captures $2$--$6\%$, and $5$--$9\%$ after a per-mode ramp-up correction learned
on calibration windows; selecting on the first half of the window captures
$18$--$26\%$ on the second. The capture fractions are protocol-sensitive:
scoring the continuation from position $128$ rather than from the prefix end
credits the same prefix estimator with $14$--$18\%$; we report the stricter
accounting. The important set moves within a regime and within a
window, decorrelating over a few hundred tokens, so the headroom is reached only
by measuring the tokens being scored. A scheduler that predicts the mask ahead
of the tokens it prunes remains future work; we do not claim one here.
Table~\ref{tab:oow} isolates the strongest out-of-window scheduler under a
protocol shared by all conditions: it recovers a quarter of the same-window
oracle's gain
over static without seeing the scored tokens, and it does not match the
unpruned model.

\begin{table}[t]
\centering
\caption{Out-of-window scheduling at half budget ($r=8$), continuation
perplexity on tokens $\geq512$ (all conditions share the scoring range). The
half-window mask is selected on each window's first 512 tokens and scored on
its second half: a deployable selection rule. It recovers
$25\%$ (130M) and $18\%$ (790M) of the same-window oracle's gain over static and
does not outperform the unpruned model.}
\label{tab:oow}
\begin{tabular}{lcccc}
\toprule
model & unpruned & static & half-window & same-window oracle \\
\midrule
130M & 10.37 & 12.19 & 11.69 & 10.14 \\
790M & 6.78  & 7.49  & 7.33  & 6.60  \\
\bottomrule
\end{tabular}
\end{table}

\begin{table}[t]
\centering
\caption{Native comparison against the released GHOST code on Mamba-2 at
$50\%$ state budget, evaluated under the Table~\ref{tab:e2e} protocol
(72 held-out windows). GHOST is run as released, with its own default
calibration (home) and re-run with calibration drawn from our $96$
calibration windows (ours), the same data the static ranking uses. Its
perplexity exceeds that of plain realized-energy static selection at both
scales under either calibration, and the same-window upper bound lies well
below it. Home-protocol
WikiText-2 costs: $+3.2$ (130M), $+1.16$ (780M) word-perplexity.}
\label{tab:ghost}
\begin{tabular}{lccccc}
\toprule
model & unpruned & static & GHOST (home) & GHOST (ours) & same-window oracle \\
\midrule
Mamba-2 130M & 12.30 & 13.58 & 13.98 & 13.86 & \textbf{11.00} \\
Mamba-2 780M & 8.07  & 8.53  & 8.77  & 8.71  & \textbf{7.33}  \\
\bottomrule
\end{tabular}
\end{table}

Finally, the phenomenon is not an artifact of text. A two-layer Mamba trained on
a synthetic switched linear system, with two known dynamical regimes, allocates
its modes by regime: a linear probe reads the active regime from the mode
energies at accuracy $1.00$, and cross-regime churn exceeds within-regime churn.
On the released language models the same probe reads the input domain from a
layer's mode energies at $99.7\%$ mean accuracy across all layers, against a
shuffled-label control near chance; the three domains are well separated, so this
shows the regime signature exists and is linearly readable; it does not show
that the discrimination is hard.

\paragraph{Reproducibility.}
All models are public released checkpoints, run in the reference fp32 path with
the fused kernels disabled so the instrument's reconstruction gate applies; the
only trained-by-us model is the disclosed two-layer switched-system Mamba, fully
specified with fixed seeds. Windows, calibration/evaluation splits, and all
random draws use seed $0$. All experiments ran on a single GB10 node; each phase
takes minutes to about an hour, and the full campaign is roughly one GPU-day.
Code (\codeurl) contains the scripts for each table, figure, and validation
gate; exact-number reproduction uses the released window manifests and result
summaries with the public checkpoints.

\section{Limitations}\label{sec:limitations}

The instrument is exact only for the per-channel diagonal case. Mamba-1 and
Falcon-Mamba have a distinct diagonal state matrix per channel and no mode
coupling after the SSM, so the Gram identity \eqref{eq:gram} gives the exact
error of any mode subset offline. Mamba-2 shares a scalar decay per head and
applies a gated normalization that couples modes within a position; there the
decomposition holds only at the pre-norm output and the end-to-end pruning error
is measured empirically.

The depth geometry is architecture-specific. The contiguous mid-network band
with a static tail is a Mamba-1 finding; the larger Mamba-1 models,
Falcon-Mamba, and Mamba-2 migrate mid-network but in punctuated layers. The
phenomenon, the mechanism, and the compression consequence hold across all of
them; only the band shape is specific.

The input-scheduled result is a measurement of headroom. Same-window scheduling
reads a window's mode energies from a full first pass over the scored tokens,
and therefore is not a causal deployment protocol and does not by itself
constitute a compute saving. Every
inexpensive scheduler we tried falls short: regime-conditioned masks recover
$2$--$6\%$ of the gain, a 256-token prefix estimator $2$--$9\%$ even with a
calibration-learned correction, and half-window selection $18$--$26\%$. The
important set decorrelates within a few hundred tokens, so the mask must be
measured on the scored tokens themselves. The sub-unpruned margin likewise does
not survive out-of-window selection. Predicting the mask before the tokens it
prunes remains an open problem. The
downstream evaluation covers standard tasks up to 1024 tokens; the migration is
strongest in the slow, long-horizon modes, so its effect on much longer contexts
is untested.

\section{Conclusion}\label{sec:conclusion}

A selective SSM layer is a bank of fixed diagonal modes, and the diagonal
structure makes state usage exactly measurable: one pass yields the exact output
error of pruning any mode subset, at any budget. The measurement shows that
trained selective SSMs re-allocate their state across inputs, that the
input-dependent write map $\Bt$ produces the re-allocation, and that a
same-window oracle gives an upper bound below every static mode ranking at
every scale and, at half the state budget, matches the unpruned model under a
retrospective protocol. The mask is read from the scored tokens, so this is
state-usage headroom; it is not a causal scheduler or a deployed saving. The phenomenon and
mechanism hold across the Mamba-1 family, the deployed 7B Falcon-Mamba, and
Mamba-2. The exact instrument replaces the heuristic choice of which
states to keep with a direct measurement whose answer varies with the
input. The realization-theoretic account of why the migration takes the form it
does, and a learned scheduler that exploits it cheaply, are the natural next
steps.

\bibliographystyle{plainnat}
\bibliography{references}

\end{document}